\newtheorem{theorem}{Theorem}
\title{PICNN: A Pathway towards Interpretable Convolutional Neural Networks}
\newcommand\correspondingauthor{\thanks{Corresponding author.}}
\author{
%Authors
% All authors must be in the same font size and format.
%Written by AAAI Press Staff\textsuperscript{\rm 1}\thanks{With help from the AAAI Publications Committee.}\\
%AAAI Style Contributions by Pater Patel Schneider,
%Sunil Issar,\\
%J. Scott Penberthy,
%George Ferguson,
%Hans Guesgen,
%Francisco Cruz\equalcontrib,
%Marc Pujol-Gonzalez\equalcontrib
% Wengang Guo\textsuperscript{\rm 1}\equalcontrib,
% Jiayi Yang\textsuperscript{\rm 1}\equalcontrib,
% Huilin Yin\textsuperscript{\rm 1},
% Qijun Chen\textsuperscript{\rm 1},
% Wei Ye\textsuperscript{\rm 1}\correspondingauthor
Wengang Guo\equalcontrib,
Jiayi Yang\equalcontrib,
Huilin Yin,
Qijun Chen,
Wei Ye\correspondingauthor
}
\title{My Publication Title --- Single Author}
\author {
Author Name
}
\title{My Publication Title --- Multiple Authors}
\author {
% Authors
First Author Name\textsuperscript{\rm 1,\rm 2},
Second Author Name\textsuperscript{\rm 2},
Third Author Name\textsuperscript{\rm 1}
}
\begin{document}

\maketitle

\begin{abstract}
	Convolutional Neural Networks (CNNs) have exhibited great performance in discriminative feature learning for complex visual tasks. Besides discrimination power, interpretability is another important yet under-explored property for CNNs. One difficulty in the CNN interpretability is that filters and image classes are entangled. In this paper, we introduce a novel pathway to alleviate the entanglement between filters and image classes. The proposed pathway groups the filters in a late conv-layer of CNN into class-specific clusters. Clusters and classes are in a one-to-one relationship. Specifically, we use the Bernoulli sampling to generate the filter-cluster assignment matrix from a learnable filter-class correspondence matrix. To enable end-to-end optimization, we develop a novel reparameterization trick for handling the non-differentiable Bernoulli sampling. We evaluate the effectiveness of our method on ten widely used network architectures (including nine CNNs and a ViT) and five benchmark datasets. Experimental results have demonstrated that our method PICNN (the combination of standard CNNs with our proposed pathway) exhibits greater interpretability than standard CNNs while achieving higher or comparable discrimination power.
\end{abstract}

\section{Introduction}
The remarkable discrimination power of convolutional neural networks (CNNs) fosters great applications in numerous tasks. However, in safety-critical domains such as autonomous vehicles \cite{zablocki2022explainability} and healthcare \cite{d2022underspecification}, interpretability is another crucial property that needs to be considered.
% such as image classification \cite{simonyan2014very,krizhevsky2012imagenet,he2016deep} and object detection \cite{girshick2014rich,redmon2016you}.
% besides discrimination power, interpretability is another crucial property that needs to be considered in constructing interpretable CNNs. 
% However, a significant challenge still remains, 
% even using state-of-the-art training techniques.
% The lack of interpretability severely constrains the potential applications of CNNs, such as in autonomous vehicles \cite{zablocki2022explainability} and healthcare \cite{d2022underspecification}.

% Interpretable features are important for advancing CNN applications, especially in safety-critical domains such as autonomous vehicles \cite{zablocki2022explainability} and healthcare \cite{d2022underspecification}.

% \begin{figure}[t]
% 	\centering
% 	\includegraphics[width=\columnwidth]{img/decompose.pdf}
% 	\caption{
% 		To train interpretable CNNs, we assume that a network can be decomposed into multiple class-specific subnetworks using an unknown filter-class correspondence matrix.
% 	}
% 	\label{fig_dec}
% \end{figure}

The interpretability of CNNs has received growing interest in recent research.
Earlier posthoc explanation methods \cite{zeiler2014visualizing,simonyan2013deep,springenberg2014striving,zhou2016learning,selvaraju2017grad,bau2017network} focus on generating offline interpretations such as saliency maps~\cite{simonyan2013deep} and class activation mapping (CAM) ~\cite{zhou2016learning} for well-trained CNNs. But posthoc methods cannot improve the intrinsic interpretability of CNNs, as they operate independently from the training process. Recently, the focus of the community has shifted to train interpretable CNNs.
For example, \cite{zhang2018interpretable,shen2021interpretable} encourage each filter in a late conv-layer to respond to only one object part. Nevertheless, these methods cannot directly reveal concepts learned by filters, as they rely on some posthoc methods~\cite{bau2017network,zhou2015object} to associate filters to the predefined concepts.

\begin{figure}[t]
	\centering
	\includegraphics[width=\columnwidth]{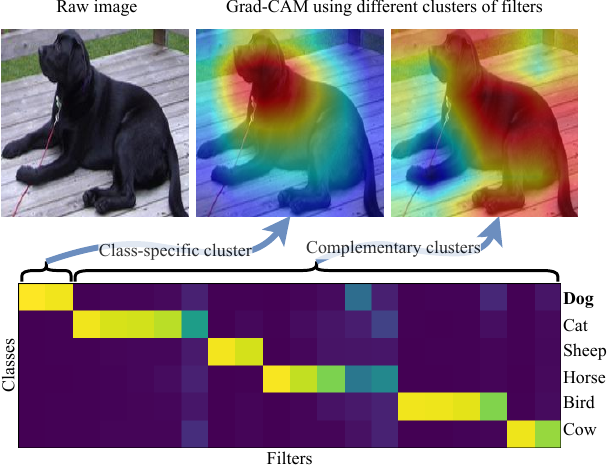}
	\caption{
		Top: comparison of Grad-CAM visualizations using different class-specific clusters of filters. Bottom: the display of the learned filter-class correspondence matrix $\mathbf{P}$. The dataset is PASCAL VOC Part \cite{chen2014detect} with  six animal classes, and the target late conv-layer consists of 20 class-specific filters.\\
		\
	}
	\label{fig_cam_P}
\end{figure}

%\textbf{Motivation} 
In this paper, interpretability is characterized by the alignment between filters and class-level concepts that are human-understandable. We propose a novel method to train interpretable CNNs whose filters in the late conv-layer can directly reveal class-level concepts without the help of any posthoc methods.
We focus on filters in the late conv-layer since these filters encode class-level concepts rather than low-level primitives \cite{zeiler2014visualizing}.
\cite{fong2018net2vec} has shown that a CNN uses multiple filters in the late conv-layers to collectively encode a class-level concept.
Considering a target late conv-layer with $N$ filters, there exist $2^N$ possible combinations of filters, allowing these filters to encode up to $2^N $ class-level concepts. This huge concept space prohibits the explanation of the relations between filters and classes due to their complex many-to-many correspondences, known as filter-class entanglement.
As argued in prior work \cite{liang2020training}, filter-class entanglement is one of the most critical reasons that hamper the interpretability of CNNs.
% alleviating filter-class entanglement is crucial for humans from interpreting the concepts of a filter.

To mitigate filter-class entanglement, we group the $N$ filters into $K$ class-specific clusters, where $K$ is the number of image classes. Clusters and classes are in a one-to-one correspondence relation, i.e., each class-specific cluster is encouraged to contain filters that encode the class-level concepts of only one particular image class. 
% The class-specific grouping of filters provides two key advantages for interpretability: First, it directly reveals the class-level concepts encoded by each filter without using any posthoc methods. Second, it alleviates filter-class entanglement by forcing the intricate many-to-many correspondences to a transparent one-to-one mapping.
%Moreover, while the grouping reduces the number of filter combinations, modern CNNs are typically over-parameterized and contain many redundant filters \cite{li2016pruning}. Thus, the grouping may not diminish discrimination power. 
% Although class-specific grouping reduces the capacity of concept encodings, it may not diminish discrimination power as CNNs are typically over-parameterized with many redundant filters \cite{li2016pruning}.
A direct way to group filters is to predefine a fixed filter-cluster assignment matrix before training.
However, this results in suboptimal performance, as the optimal number of filters assigned to each class is unknown a priori. 
\cite{shen2021interpretable} employs spectral clustering \cite{shi2000normalized} to group filters during each forward propagation. Differing from our paper, their goal is to make each filter represent a set of interpretable image features, such as a specific object part or image region with a clear meaning.
% However, spectral clustering incurs extremely high computational cost.

In this paper, we develop a novel pathway that can group filters into class-specific clusters and be applied to many network architectures. 
%Our key idea is that good class-specific filters facilitate filter grouping, while good filter grouping strengthens class-specificity within the filters.
Specifically, the pathway introduces a learnable matrix $\mathbf{P}\in\mathbb{R}^{K\times N}$ whose element $p_{y, i}$ represents the probabilistic correspondence between the $i$-th filter and $y$-th class. Based on $\mathbf{P}$, we can use the argmax operation to generate a binary filter-cluster assignment matrix. However, the argmax operation is non-differentiable, impeding end-to-end optimization. 
% One potential solution is using Gumbel-Softmax \cite{jang2016categorical}.
% \todo{its drawback}
One potential solution is using Gumbel-Softmax \cite{jang2016categorical} to approximate the argmax operation to enable gradient propagation. Nevertheless, this approximation introduces bias, as Gumbel-Softmax constitutes a biased estimate of the original argmax operation. Such bias can accumulate during the training process and result in suboptimal performance.
% However, the continuous relaxation makes the sampling less accurate compared to discrete sampling.
% %the continuous conversion compromises sampling accuracy compared to the original discrete method.
To address this challenge, we consider $p_{y,j}$ as the parameter of the Bernoulli distribution, based on which a filter-cluster assignment matrix is sampled from $\mathbf{P}$. Since the sampling is also non-differentiable, we propose a novel reparameterization trick to support end-to-end training. Our filter-cluster assignment strategy yields superior performance compared to the Gumbel-Softmax-based strategy. During end-to-end training, filter grouping and filter learning are jointly optimized:
Filter grouping is conducted in the forward propagation, while filter learning is conducted in the backward propagation. Figure \ref{fig_cam_P} displays the learned correspondence matrix $\mathbf{P}$ and Grad-CAM \cite{selvaraju2017grad} visualizations for validating the learned filter-class correspondence.

The contributions of this paper are as follows:
\begin{itemize}
\item We develop a novel pathway to transform a standard CNN into an interpretable CNN. This pathway is versatile and can be flexibly combined with many CNN architectures and even Transformer architectures.
\item We propose to group filters into class-specific clusters by the filter-cluster assignment matrix, which is sampled from a learnable filter-class correspondence matrix $\mathbf{P}$ by the Bernoulli distribution. Each element of $\mathbf{P}$ can be used as the parameter of the Bernoulli distribution.
\item We propose a reparameterization trick for sampling by the Bernoulli distribution to support end-to-end training.
% 2) One can timely read out the class-level concepts of each filter in our interpretable CNN, without relying on any posthoc methods. 
% 3) We extend dropout into a learnable version with the proposed reparameterization trick.
\item We evaluate the effectiveness of our proposed PICNN using five benchmark datasets. Experimental results demonstrate that PICNN achieves higher or comparable discrimination power and better interpretability than backbone models.
\end{itemize}
\section{Related Work}
We briefly survey the literature on posthoc filter interpretability, learning interpretable filters, and class-specific filters. 

\textbf{Posthoc filter interpretability} has been widely studied, which aims to build the mapping between the abstract patterns of filters in well-trained CNNs and the human-understandable domains such as images \cite{wang2021interpretable}.
Earlier works \cite{zeiler2014visualizing,mahendran2015understanding, yosinski2015understanding} inspect maximal activations of filters across different images.
CAM \cite{zhou2016learning} localizes image regions that are most important for a target class.
CAM has gained much popularity and promotes numerous further studies \cite{selvaraju2017grad,lee2021relevance,hasany2023seg,sarkar2023rl}.
Notably, Grad-CAM \cite{selvaraju2017grad} extends CAM by using gradients to weigh the contribution of each filter to the target class.
Other well-known posthoc methods include guided backpropagation \cite{springenberg2014striving}, saliency maps \cite{simonyan2013deep}, and deconvolutional network \cite{zeiler2014visualizing,dosovitskiy2016inverting}.
Moreover, some works disentangle the filter representations of a well-trained CNN into an explanatory graph \cite{zhang2018interpreting}, a decision tree \cite{zhang2019interpreting}, or textual descriptions \cite{hendricks2016generating,yang2022explaining}.
However, these posthoc methods cannot remove the existing filter-class entanglement in well-trained CNNs and may not faithfully capture what the original CNNs compute \cite{rudin2019stop}. 
In contrast, our work focuses on training interpretable CNNs, where the relations between filters and classes are clearly revealed by the learned filter-class correspondence matrix.

\textbf{Learning interpretable filters}
has been conducted in ICNN \cite{zhang2018interpretable}. The learned each filter represents a specific object part, such as animal eyes. 
Filters in ICNN could only represent object parts in ball-like areas.
To overcome this limitation,  ICCNN \cite{shen2021interpretable} extends filter interpretability to image regions with arbitrary shapes.
However, the filters learned in both ICNN and ICCNN are not class-specific, as they are trained in a class-agnostic manner.
Learning class-specific filters is first introduced by Class-Specific Gate (CSG) \cite{liang2020training}.
Specifically, CSG introduces a binary filter-class gate matrix to represent correspondences between filters and classes.
The binary filter-class gate matrix is relaxed by the $l_1$ norm to weigh feature maps of filters and simultaneously forced to be sparse.
Differing from CSG, we consider the filter-class correspondences as probability parameters and optimize these parameters using a probability approach.

\textbf{Class-specific filters} have been widely utilized across various studies.
Class-specific filters enable the integration of direct supervision into the hidden layers of CNNs, thereby alleviating gradient vanishing \cite{jiang2017learning}.
\cite{wang2018learning,martinez2019action} improve fine-grained recognition using a filter bank that captures class-specific discriminative patches.
Except for discriminative models, generative models also benefit from class-specific formulation.
\cite{tang2020local,li2021collaging} create multiple class-specific generators within a GAN \cite{goodfellow2014generative} to facilitate the generation of small objects.
\cite{kweon2021unlocking} proposes a class-specific adversarial erasing framework to generate more precise CAM.
Unlike these models that enhance the discrimination or generation power of CNNs, we focus on filter interpretability.
Additionally, these models require predefined filter-class correspondences, whereas our work automatically learns filter-class correspondences.

% \textbf{Dropout mask.} Our work learns class-specific filters from masked feature maps, which shares a similar principle to Dropout \cite{hinton2012improving}.
% Dropout is initially introduced as a regularization technique to combat overfitting. 
% Over the years, extensive research has been conducted, including learnable dropout \cite{boluki2020learnable,liu2023gflowout}, regularized dropout \cite{wu2021r}, image super-resolution \cite{kong2022reflash} and object localization \cite{tompson2015efficient}.
% Our work can also be regarded as a form of learnable dropout.
% To the best of our knowledge, our work is the first applying dropout for training interpretable CNNs.

\section{Model PICNN}
This paper aims to train interpretable CNNs whose filters are disentangled from image classes and can directly reveal class-level concepts.
To achieve this, we propose to group filters into class-specific clusters. Each class-specific cluster of filters exclusively activates for inputs from a particular class and deactivates for inputs from other classes.

\subsection{Objective Function}
We enrich the target conv-layer in a CNN, typically the last conv-layer, with an extra pathway, i.e., the interpretation pathway to learn class-specific filters (Figure \ref{fig_fk}).
We preserve the original discrimination pathway to fit the underlying task. 
% At the same time, we add an interpretable pathway to improve filter interpretability.

\textbf{1)} In the discrimination pathway (the blue shaded area in Figure \ref{fig_fk}), the classifier (i.e., the multilayer perceptron (MLP)) receives complete feature maps from the target conv-layer and outputs a softmax-normalized prediction.
The objective is to fit the underlying task by optimizing a loss function, such as the cross-entropy loss for a classification task:
\begin{equation}
	\mathcal{L}_{\text {dis}} = H(\mathbf{y},\mathbf{y}_1)
\end{equation}
where $\mathbf{y} \in \mathbb{R}^K$ is the one-hot encoding of the class label\footnote{In the paper, integer $y\in \{1,\cdots,K\}$ represents the true class label of an image, while vector $\mathbf{y}\in \mathbb{R}^K$ represents the one-hot encoding of $y$.}, $\mathbf{y}_1 \in \mathbb{R}^K$ is the prediction from the discrimination pathway, and $K$ is the number of image classes.

\textbf{2)} In the interpretation pathway (the green shaded area in Figure \ref{fig_fk}), we input into the same classifier used in the discrimination pathway the feature maps of a class-specific cluster of filters. We postulate that each class-specific cluster of filters specialized for a given class is essential for the network to classify that class, whereas other clusters can be removed/masked without compromising discrimination power.
Then, the classifier outputs a prediction $\mathbf{y}_2 \in \mathbb{R}^K$.
% The classifier shares weights with the one in the discrimination pathway for reducing parameters.
The cross-entropy loss in this interpretation pathway is:
\begin{equation}
	\mathcal{L}_{\text {int}} = H(\mathbf{y},\mathbf{y}_2)
	\label{eq_ce_mask}
\end{equation}

% Our intuition is that in order to learn class-specific filters, the network needs to achieve comparable classification performance in both pathways. 

The interpretation pathway is optimized to achieve a comparable classification performance as the discrimination pathway. The optimization of the interpretation pathway encourages the learning of the class-specific filters in the discrimination pathway. We simultaneously optimize the losses of these two pathways as follows:
\begin{equation}
	\mathcal{L} =\mathcal{L}_{\text {dis}}+\lambda \mathcal{L}_{\text {int}}
\end{equation}
where $\lambda$ is a regularization parameter.

% At a high level, our framework introduces negligible additional computational cost and can be easily applied to most CNN architectures.

\subsection{Class-specific Grouping of Filters}
\label{sec_mask}
% In this section, we elaborate on our sampling strategy.

\subsubsection{Filter-class Correspondence Matrix}
We introduce a learnable matrix $\mathbf{P}\in \mathbb{R}^{K\times N}$ to indicate the probabilities of the filter-class correspondences, where $N$ is the number of filters in the target conv-layer.
%where each column of $\mathbf{P}$ represents the correspondences of one filter across all classes.
A larger value of $p_{y,i}$ represents a stronger correspondence between the $i$-th filter and the $y$-th class.
With the correspondence matrix $\mathbf{P}$, we can generate the filter-cluster assignment matrix $\mathbf{Z}$ by the argmax operation. 
However, this operation is non-differentiable. 
% \todo{Gumbel-Softmax} 
Using Gumbel-Softmax \cite{jang2016categorical} for approximating the argmax operation induces approximation bias and yields suboptimal performance.
Thus, we propose to sample $\mathbf{Z}$ from $\mathbf{P}$ by the Bernoulli distribution parameterized by the element of $\mathbf{P}$.
% masks guided by the following intuition: If a filter exhibits a stronger correspondence with the given input,  its corresponding feature map is more likely to be preserved; otherwise, it may be discarded.

\subsubsection{Bernoulli Sampling}
%After obtaining the prediction $\mathbf{y1}$
% To generate filter masks, we respectively normalize the correspondence scores of each filter across all classes using a softmax function.
%: $\mathbf{p}_{\cdot i} \leftarrow \text{softmax}(\mathbf{p}_{\cdot i})$, where vector $\mathbf{p}_{\cdot i}$ is the $i$-th column of $\mathbf{P}$.
Supposing the input belongs to the $y$-th class, we conduct $N$ independent Bernoulli trials to generate a binary filter-cluster assignment vector $\mathbf{z}=[z_{1},...,z_{N}]\in \{0,1\}^N$ from $\mathbf{p}_{y}=[p_{y,1},...,p_{y,N}]$, i.e., $z_{i} \sim \mathbf{Ber}(p_{y,i}), i=1,2,\ldots,N$, where $\mathbf{p}_{y}$ is the $y$-th row of matrix $\mathbf{P}$.
The $i$-th element $z_i$ determines whether the $i$-th filter belongs to (if $z_i = 1$) the class-specific cluster of the $y$-th class or not (if $z_i = 0$).
After that, we use $\mathbf{z}$ to mask the complete feature maps $\mathbf{H}=\{\mathbf{H}_i\}_{i=1}^N$ (each $\mathbf{H}_i$ is a 2D matrix) of all the filters to generate masked feature maps $\widetilde{\mathbf{H}}=\{\widetilde{\mathbf{H}}_i\}_{i=1}^N$ by the Hadamard product:
\begin{equation}
	\widetilde{\mathbf{H}}_i=\mathbf{H}_i \odot z_i \quad i=1,\cdots,N
	\label{eq_mask}
\end{equation}
where $z_i$ is broadcasted along the spatial dimensions of $\mathbf{H}_i$ for compatible product operation. If $z_i = 1$, $\widetilde{\mathbf{H}}_i$ is the original feature maps of the $i$-th filter; If $z_i = 0$, $\widetilde{\mathbf{H}}_i$ is set to a matrix of all zero elements.
Finally, we feed both the complete feature maps $\mathbf{H}$ and the masked ones $\widetilde{\mathbf{H}}$ into the classifier to compute predictions $\mathbf{y}_1$ and $\mathbf{y}_2$, respectively.

% \subsubsection{Discussion} 
% Our sampling strategy follows a similar principle to the Dropout operation \cite{hinton2012improving}, which is widely employed to enhance robustness and mitigate overfitting.
% In Dropout, the assignment vector $\mathbf{z}$ is independently drawn from the Bernoulli distribution with a fixed dropout rate $1-q$, i.e., $z_i \sim \mathbf{Ber}(q)$.
% In contrast, our approach uses learnable Bernoulli parameters rather than fixed ones and introduces class information.
% %Besides giving a clear semantic interpretation of each filter, our approach also enables more robust predictions. 

\subsection{Optimization}
Now we elaborate on how to optimize the correspondence matrix $\mathbf{P}$ together with other network parameters using stochastic gradient descent (SGD).
The challenge is that sampling the filter-cluster assignment matrix $\mathbf{Z}$ from $\mathbf{P}$ by the Bernoulli trials is non-differentiable, which impedes SGD optimization of $\mathbf{P}$.
To deal with this, we resort to the reparameterization trick \cite{kingmaauto}.
% On the other hand, we find that our model may suffer from a trivial solution.
%On the other hand, we find a significant training instability problem that has been overlooked by previous work \cite{liang2020training}.
%that poses a high risk of distorting filter interpretations, 
%We solve this problem by a straightforward sample selection strategy.

\begin{figure*}[!htp]
	\centering
	\includegraphics[width=\textwidth]{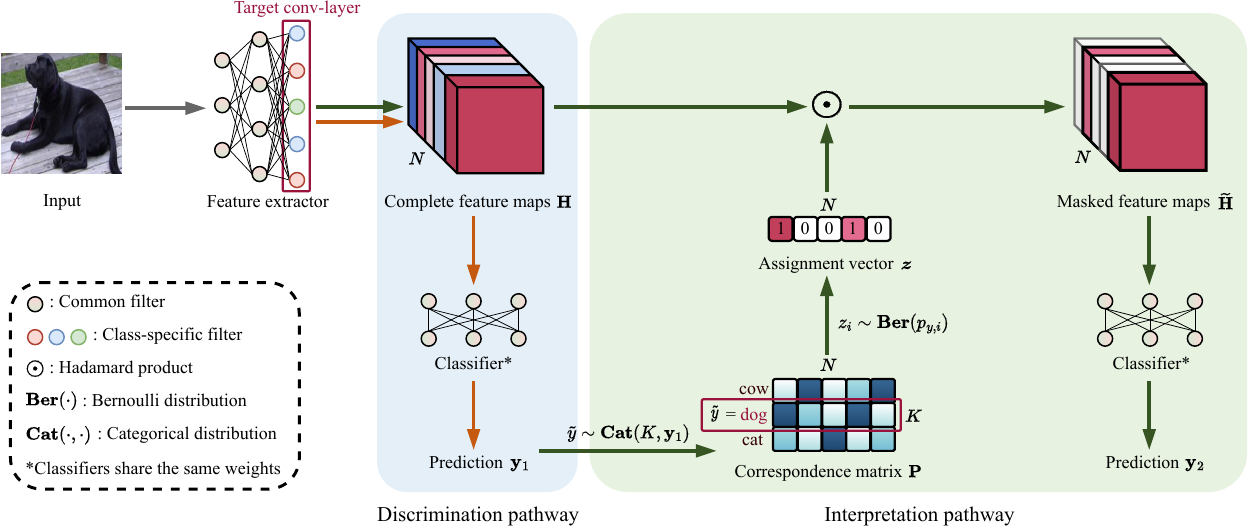}
	\caption{The pipeline of our model PICNN, which consists of \textbf{a)} the discrimination pathway to fit the underlying classification task and \textbf{b)} the interpretation pathway to group filters into class-specific clusters, whose optimization encourages the discrimination pathway to learn class-specific filters.
		% Our framework incurs little extra computational cost and is applicable to various CNN architectures. 
	}
	\label{fig_fk}
\end{figure*}

% \subsubsection{Reparameterization Trick}
To avoid computing the partial derivative for sampling, we propose to reparameterize the assignment vector $\mathbf{z}$, which is initially drawn from the Bernoulli distribution with the probability parameter $\mathbf{p}_{y}$. 
Specifically, we redefine $z_{i} \sim \mathbf{Ber}(p_{y,i})$ as:
\begin{equation}
z_i=p_{y,i}+\eta _i
\label{eq_z}
\end{equation}
where $\eta _i$ is a normalization scalar defined as:
\begin{equation}
	\begin{aligned}
		\eta _i=
		\begin{cases}
			1-p_{y,i},&p_{y,i}\ge\epsilon_i \\
			-p_{y,i},&p_{y,i}<\epsilon_i
		\end{cases}\quad i=1,\cdots,N
	\end{aligned}
	\label{eq_eta}
\end{equation}
where $\epsilon_i$ is sampled from the continuous uniform distribution $\mathbf{\mathcal{U}}(0,1)$, i.e., $\epsilon_i \sim \mathbf{\mathcal{U}}(0,1)$.
%This trick converts sampling $z_i$ from a Bernoulli distribution parameterized trainable parameters into sampling $\epsilon_i$ from a uniform distribution with no relevance to trainable parameters.
This reparameterization trick converts the sampling of $z_i$ from a Bernoulli distribution, which relies on trainable parameters, into sampling $\epsilon_i$ from a fixed uniform distribution. This trick enables us to compute the partial derivative $\frac{\partial \mathbf{z}}{\partial \mathbf{P}}$.

\begin{theorem}
$z_i \sim \mathbf{Ber}(p_{y,i})$.
\end{theorem}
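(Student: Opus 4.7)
The plan is to unfold the piecewise definition of $\eta_i$ into the defining equation $z_i = p_{y,i} + \eta_i$ to show that $z_i$ takes only the values $0$ and $1$, and then to compute $P(z_i = 1)$ directly from the law of $\epsilon_i$. First I would observe that substituting the two cases of equation~(6) into equation~(5) gives $z_i = 1$ whenever $p_{y,i} \ge \epsilon_i$ and $z_i = 0$ whenever $p_{y,i} < \epsilon_i$, so $z_i$ is a $\{0,1\}$-valued random variable whose law is determined entirely by the event $\{\epsilon_i \le p_{y,i}\}$.

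Next I would compute $P(z_i = 1) = P(\epsilon_i \le p_{y,i})$. Since $\epsilon_i \sim \mathcal{U}(0,1)$ has CDF $F(t) = t$ for $t \in [0,1]$, and $p_{y,i} \in [0,1]$ (which is implicitly required, e.g.\ enforced by a sigmoid parameterization of $\mathbf{P}$), this probability equals $p_{y,i}$. Consequently $P(z_i = 0) = 1 - p_{y,i}$, which is exactly the mass function of $\mathbf{Ber}(p_{y,i})$. The conclusion $z_i \sim \mathbf{Ber}(p_{y,i})$ then follows by definition.

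There is essentially no difficult step here; the result is the standard inverse-CDF sampling identity specialized to a two-point distribution. The only subtlety worth flagging in the write-up is the domain constraint $p_{y,i} \in [0,1]$ used in the CDF evaluation; outside that range the equality $P(\epsilon_i \le p_{y,i}) = p_{y,i}$ fails, so the proof implicitly assumes (or the model explicitly enforces) that the entries of $\mathbf{P}$ lie in the unit interval. Noting this assumption makes the one-line computation rigorous and completes the proof.
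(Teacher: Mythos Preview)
Your proposal is correct and follows essentially the same route as the paper: substitute the piecewise definition of $\eta_i$ into $z_i = p_{y,i} + \eta_i$ to obtain $z_i = \mathds{1}\{p_{y,i} \ge \epsilon_i\}$, then use the uniform CDF to get $P(z_i=1)=p_{y,i}$. Your explicit remark that the argument requires $p_{y,i}\in[0,1]$ is a useful clarification the paper leaves implicit.
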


% One can confirm that $z_i$ still follows $\mathbf{Ber}(p_{y,i})$.
\begin{proof}
Substituting Equation \ref{eq_eta} into Equation \ref{eq_z}, we obtain:
\begin{equation}
	\begin{aligned}
		z_i=
		\begin{cases}
			1,&p_{y,i}\ge\epsilon_i \\
			0,&p_{y,i}<\epsilon_i
		\end{cases} \quad i=1,\cdots,N
	\end{aligned}
\end{equation}
Since $\epsilon_i \sim \mathbf{\mathcal{U}}(0,1)$, the probability $p(p_{y,i}\ge\epsilon_i)$ equals the ratio between the length of the interval $[0,p_{y,i}]$ and the total interval $[0, 1]$.
Thus, we have  $p(z_i=1)=p(p_{y,i}\ge\epsilon_i)=p_{y,i}$ and $p(z_i=0)=1-p(z_i=1)=1-p_{y,i}$,  meaning $z_i$ follows the Bernoulli distribution $\mathbf{Ber}(p_{y,i})$.
\end{proof}

%
%\begin{figure*}[htbp]
%	\centering
%	\subfigure[]{\includegraphics[width=0.3\linewidth]{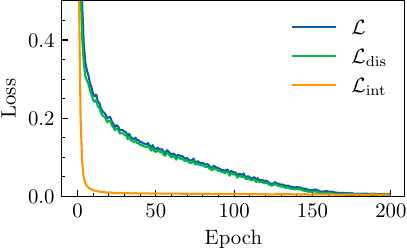}\label{fig_2D_b}}
%      % \quad \quad    \quad
%	\subfigure[]{\includegraphics[width=0.3\linewidth]{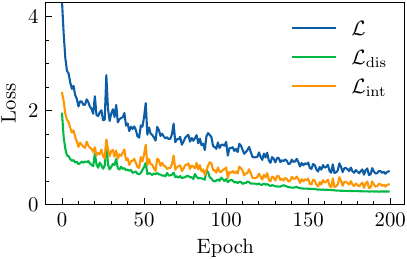}\label{fig_2D_c}}
%	\caption{
%		Trivial solution. 
%		\textbf{a)} Using the label $y$ as index vector, the loss $\mathcal{L}_{\text {int}}$ quickly becomes zero, leading to a trivial solution.
%		\textbf{b)} Replacing the label $y$ with the pseudo-label $\widetilde{y}$, our model shows a stable evolution.
%	}
%	\label{fig_acc_loss}
%\end{figure*}

% \subsubsection{Trivial solution} 
Our model may suffer from a trivial solution, as the prediction $\mathbf{y}_2$ is indirectly dependent on the label $y$ (from which a red arrow flows in Figure \ref{fig_trivialsolution}).
As a result, the network can exploit this dependency and effortlessly minimize the loss $\mathcal{L}_{\text {int}}$ (Figure \ref{fig_2D_b}).
To prevent this, we propose to replace the label $y$ with a pseudo-label $\widetilde{y}$ for indexing the correspondence matrix $\mathbf{P}$.
The pseudo-label $\widetilde{y}$ is drawn from a categorical distribution parameterized by the softmax-normalized prediction $\mathbf{y}_1$ from the discrimination pathway, i.e., $\widetilde{y} \sim \mathbf{Cat}(K,\mathbf{y}_1)$.
This intuitive idea is that the pseudo-label $\widetilde{y}$ gradually approaches the label $y$ as the training unfolds.
% At a high level, the sampling strategy introduces an underlying assumption that the network should correctly classify inputs before learning class-specific filters. 
To support SGD optimization, we utilize the reparameterization trick once again to redefine $\widetilde{y}$. Specifically, we reparameterize the pseudo-label as:
\begin{equation}
\widetilde{\mathbf{y}}=\mathbf{y}_1+\boldsymbol{\tau}
\label{eq_y}
\end{equation}
where $\boldsymbol{\tau}$ is a constant vector defined as:
\begin{equation}
	\begin{aligned}
		\boldsymbol{\tau}[k]=
		\begin{cases}
			1-\mathbf{y}_1[k],&\sum\limits_{j=0}^{k-1}\mathbf{y}_1[j] < \xi  \le \sum\limits_{j=0}^{k}\mathbf{y}_1[j] \\
			-\mathbf{y}_1[k],&\text{otherwise}
		\end{cases}\\ k=1,\cdots,K
	\end{aligned} 
	\label{eq_tau}
\end{equation}
where  $\xi \sim \boldsymbol{\mathcal{U}}(0,1)$, $[k]$ indexes the $k$-th entry of a vector and we append 0 before the first element of $\mathbf{y}_1$ and 1 after the last element of $\mathbf{y}_1$.

\begin{theorem}
$\widetilde{y} \sim \mathbf{Cat}(K,\mathbf{y}_1)$.
\end{theorem}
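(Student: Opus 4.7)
The plan is to mirror the structure of the proof of Theorem 1 but applied to the inverse CDF construction of a categorical random variable. First I would substitute Equation~\ref{eq_tau} into Equation~\ref{eq_y} to obtain the piecewise form
\begin{equation*}
\widetilde{\mathbf{y}}[k]=
\begin{cases}
1,& \sum_{j=0}^{k-1}\mathbf{y}_1[j] < \xi \le \sum_{j=0}^{k}\mathbf{y}_1[j]\\
0,& \text{otherwise}
\end{cases}
\end{equation*}
which shows that each entry of $\widetilde{\mathbf{y}}$ is binary. This is the analogue of the opening step in Theorem~1, where the additive noise $\boldsymbol{\tau}$ simply snaps $\mathbf{y}_1[k]$ to either $0$ or $1$ depending on which subinterval $\xi$ falls into.

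Next I would observe that because $\mathbf{y}_1$ is the output of a softmax, its components are nonnegative and sum to $1$, so the half-open intervals
\[
I_k=\Bigl(\textstyle\sum_{j=0}^{k-1}\mathbf{y}_1[j],\ \sum_{j=0}^{k}\mathbf{y}_1[j]\Bigr],\quad k=1,\dots,K,
\]
form a partition of $(0,1]$ (using the padding convention that $\mathbf{y}_1[0]=0$ and the final cumulative sum equals $1$). Consequently exactly one index $k^{*}$ satisfies $\xi \in I_{k^{*}}$, so $\widetilde{\mathbf{y}}$ is a valid one-hot vector and $\widetilde{y}=k^{*}$ is well-defined as a random integer in $\{1,\dots,K\}$.

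Finally, I would compute the probability of each outcome. Since $\xi\sim\mathcal{U}(0,1)$, the probability that $\xi$ lies in any subinterval of $(0,1]$ equals the length of that subinterval, so
\[
\mathbb{P}(\widetilde{y}=k)=\mathbb{P}(\xi \in I_k)=\Bigl(\textstyle\sum_{j=0}^{k}\mathbf{y}_1[j]\Bigr)-\Bigl(\textstyle\sum_{j=0}^{k-1}\mathbf{y}_1[j]\Bigr)=\mathbf{y}_1[k],
\]
which is exactly the probability mass function of $\mathbf{Cat}(K,\mathbf{y}_1)$. This concludes the argument.

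I expect no deep obstacle here, as the construction is the classical inverse transform sampling for a discrete distribution. The only thing to be careful with is bookkeeping around boundary conventions, namely the padding $\mathbf{y}_1[0]=0$ and the closing $\mathbf{y}_1[K+1]=1$ referenced after Equation~\ref{eq_tau}, which together ensure the intervals $I_k$ cover $(0,1]$ without gaps or overlaps and that the zero-probability event $\xi=0$ does not cause ambiguity.
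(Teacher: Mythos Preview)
Your proposal is correct and follows essentially the same approach as the paper's own proof: substitute the definition of $\boldsymbol{\tau}$ into that of $\widetilde{\mathbf{y}}$, then use the fact that for $\xi\sim\mathcal{U}(0,1)$ the probability of landing in a subinterval equals its length to conclude $p(\widetilde{y}=k)=\mathbf{y}_1[k]$. Your version is slightly more thorough in explicitly verifying that the intervals $I_k$ partition $(0,1]$ and hence that $\widetilde{\mathbf{y}}$ is a valid one-hot vector, but the core argument is identical.
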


% One can confirm that $\widetilde{y} \sim \mathbf{Cat}(K,\mathbf{y}_1)$.
\begin{proof}
Substituting Equation \ref{eq_tau} into Equation \ref{eq_y}, we obtain:
\begin{equation}
	\begin{aligned}
		\widetilde{\mathbf{y}}[k]=
		\begin{cases}
			1,&\sum\limits_{j=0}^{k-1}\mathbf{y}_1[j] < \xi  \le \sum\limits_{j=0}^{k}\mathbf{y}_1[j] \\
			0,&\text{otherwise}
		\end{cases}
	\end{aligned}
	%	\label{eq_tau}
\end{equation}
Since $\xi \sim \mathbf{\mathcal{U}}(0,1)$, the probability $p(\sum_{i=0}^{k-1}\mathbf{y}_1[i] < \xi  \le \sum_{i=0}^{k}\mathbf{y}_1[i])$ equals to the ratio between the length of the interval $[\sum_{i=0}^{k-1}\mathbf{y}_1[i],\sum_{i=0}^{k}\mathbf{y}_1[i]]$ and the total interval $[0, 1]$.
Thus, we have $p(\widetilde{{y}}=k)=\mathbf{y}_1[k]$, i.e.,  $\widetilde{{y}} \sim \mathbf{Cat}(K,\mathbf{y}_1)$.

\end{proof}

\begin{figure}[tbh]
	\begin{center}
		\includegraphics[width=\columnwidth]{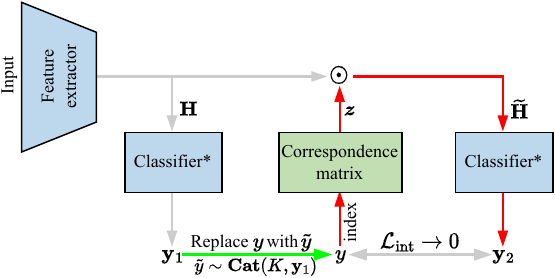}
	\end{center}
	\caption{The interpretation pathway encounters a label leakage issue, as the label $y$ is employed to index the correspondence matrix $\mathbf{P}$. To deal with this issue, we replace $y$ with $\widetilde{y}$ sampled from the categorical distribution parameterized by the softmax-normalized prediction of the $y$-th class in the discrimination pathway.
		%		To deal with this, we propose to replace the label $y$ with the pseudo-label $\widetilde{y}$. 
	}
	\label{fig_trivialsolution}
\end{figure}

\begin{figure}[t]


	\centering
	\subfigure[]{\includegraphics[width=0.32\textwidth]{img/acc_loss2.pdf}\label{fig_2D_b}}
 
	\subfigure[]{\includegraphics[width=0.32\textwidth]{img/acc_loss3.pdf}\label{fig_2D_c}}
	\caption{
		Trivial solution. 
		\textbf{a)} Using the label $y$ as index vector, the loss $\mathcal{L}_{\text {int}}$ quickly becomes zero, leading to a trivial solution.
		\textbf{b)} After replacing the label $y$ with the pseudo-label $\widetilde{y}$, our model shows a stable evolution.
	}
	\label{fig_acc_loss}
\end{figure}

Figure \ref{fig_2D_c} demonstrates that this categorical sampling effectively circumvents the trivial solution.

\subsection{Time Complexity Analysis}

Most clustering methods incur a high time complexity between  $\mathcal{O} (M^2)$ and $\mathcal{O} (M^3)$ if the number of input images is $M$. In PICNN, the interpretation pathway plays a role that is equivalent to a clustering component but with a lower time complexity of $\mathcal{O}(M(2K+2K^2+2KN+Nd^2+2KN^2))$, where $K$ is the number of image classes, $N$ is the number of filters and $d$ is the size of the feature map. And the space complexity is $\mathcal{O}(NK)$. 
We report the running time of PICNN for large-scale architectures in Table~\ref{tab_large_model}.
PICNN adds at most 5\% computational overhead to backbone models.

% \begin{figure}[!htbp]
% 	\centering
% 	\subfigure[]{\includegraphics[width=!,height=0.3\linewidth]{img/acc_loss2.pdf}\label{fig_2D_b}}
% 	\subfigure[]{\includegraphics[width=!,height=0.3\linewidth]{img/acc_loss3.pdf}\label{fig_2D_c}}
% 	\caption{
% 		Trivial solution. 
% 		\textbf{a)} Using the label $y$ as index vector, the loss $\mathcal{L}_{\text {int}}$ quickly becomes zero, leading to a trivial solution.
% 		\textbf{b)} Replacing the label $y$ with the pseudo-label $\widetilde{y}$, our model shows a stable evolution.
% 	}
% 	\label{fig_acc_loss}
% \end{figure}

%\newpage
%We conduct 4 sets of experiments to evaluate our approach.
%The first set is...

% \textcolor{red}{CNN CNNs?}
\begin{table*}[!htp]
\setlength\tabcolsep{1.5pt}
\begin{tabular}{@{}cccccccccccccc@{}}
% \toprule
\toprule
{ }       & { }     & \multicolumn{4}{c}{{ CIFAR-10}}                & \multicolumn{4}{c}{{ PASCAL VOC Part}}                                      & \multicolumn{4}{c}{{ STL-10}}         \\ \midrule
\multicolumn{1}{c|}{{Backbone}}          & \multicolumn{1}{c|}{{Method}} & { ACC1$\uparrow$}           & { ACC2$\uparrow$}           & { ACC3$\downarrow$}        & \multicolumn{1}{c|}{{ MIS$\uparrow$}}            & { ACC1$\uparrow$}           & { ACC2$\uparrow$}           & { ACC3$\downarrow$}        & \multicolumn{1}{c|}{{ MIS$\uparrow$}}            & { ACC1$\uparrow$}           & { ACC2$\uparrow$}           & { ACC3$\downarrow$}        & { MIS$\uparrow$}            \\ \midrule
\multicolumn{1}{c|}{{ }}                               & \multicolumn{1}{c|}{{ STD}}             & { 0.952}          & { 0.588}          & { 0.950}          & \multicolumn{1}{c|}{{ 0.191}}          & { 0.782}          & { 0.769}          & { 0.781}          & \multicolumn{1}{c|}{{ 0.203}}          & { 0.895}          & { 0.778}          & { 0.894}          & { 0.148}          \\
\multicolumn{1}{c|}{{ }}                               & \multicolumn{1}{c|}{{ ICCNN}}           & { \textbf{0.955}} & { N/A}              & { N/A}              & \multicolumn{1}{c|}{{ 0.067}}          & { 0.754}          & { N/A}              & { N/A}              & \multicolumn{1}{c|}{{ 0.140}}          & { 0.871}          & { N/A}              & { N/A}              & { 0.052}          \\
\multicolumn{1}{c|}{{ }}                               & \multicolumn{1}{c|}{{ CSG}}             & { 0.830}          & { 0.505}          & { 0.659}          & \multicolumn{1}{c|}{{ 0.132}}          & { 0.781}          & { 0.686}          & { 0.666}          & \multicolumn{1}{c|}{{ 0.148}}          & { 0.888}          & { 0.373}          & { 0.699}          & { 0.115}          \\
\multicolumn{1}{c|}{\multirow{-4}{*}{{ ResNet-18}}}       & \multicolumn{1}{c|}{{ PICNN}}           & { 0.951}          & { \textbf{0.948}} & { \textbf{0.333}} & \multicolumn{1}{c|}{{ \textbf{0.243}}} & { \textbf{0.800}} & { \textbf{0.798}} & { \textbf{0.639}} & \multicolumn{1}{c|}{{ \textbf{0.246}}} & { \textbf{0.896}} & { \textbf{0.893}} & { \textbf{0.657}} & { \textbf{0.224}} \\ \midrule\midrule
\multicolumn{1}{c|}{{ }}                               & \multicolumn{1}{c|}{{ STD}}             & { \textbf{0.918}} & { 0.401}          & { 0.913}          & \multicolumn{1}{c|}{{ 0.128}}          & { 0.803}          & { 0.629}          & { 0.799}          & \multicolumn{1}{c|}{{ 0.138}}          & { 0.702}          & { 0.684}          & { 0.697}          & { \textbf{0.142}} \\
\multicolumn{1}{c|}{\multirow{-2}{*}{{ VGG-11}}}       & \multicolumn{1}{c|}{{ PICNN}}           & { 0.915}          & { \textbf{0.910}} & { \textbf{0.192}} & \multicolumn{1}{c|}{{ \textbf{0.177}}} & { \textbf{0.840}} & { \textbf{0.837}} & { \textbf{0.346}} & \multicolumn{1}{c|}{{ \textbf{0.166}}} & { \textbf{0.716}} & { \textbf{0.709}} & { \textbf{0.316}} & { 0.131}          \\  \midrule
\multicolumn{1}{c|}{{ }}                               & \multicolumn{1}{c|}{{ STD}}             & { 0.904}          & { 0.308}          & { 0.893}          & \multicolumn{1}{c|}{{ 0.091}}          & { \textbf{0.838}} & { 0.733}          & { 0.837}          & \multicolumn{1}{c|}{{ 0.122}}          & { \textbf{0.646}} & { 0.618}          & { 0.644}          & { 0.115}          \\
\multicolumn{1}{c|}{\multirow{-2}{*}{{ AlexNet}}}      & \multicolumn{1}{c|}{{ PICNN}}           & { \textbf{0.905}} & { \textbf{0.900}} & { \textbf{0.188}} & \multicolumn{1}{c|}{{ \textbf{0.191}}} & { 0.821}          & { \textbf{0.817}} & { \textbf{0.432}} & \multicolumn{1}{c|}{{ \textbf{0.169}}} & { 0.645}          & { \textbf{0.641}} & { \textbf{0.205}} & { \textbf{0.118}} \\ \midrule
\multicolumn{1}{c|}{{ }}                               & \multicolumn{1}{c|}{{ STD}}             & { \textbf{0.958}} & { 0.438}          & { 0.954}          & \multicolumn{1}{c|}{{ 0.163}}          & { \textbf{0.918}} & { \textbf{0.917}} & { \textbf{0.714}} & \multicolumn{1}{c|}{{ 0.148}}          & { 0.829}          & { 0.791}          & { 0.824}          & { 0.200}          \\
\multicolumn{1}{c|}{\multirow{-2}{*}{{ DenseNet-121}}}     & \multicolumn{1}{c|}{{ PICNN}}           & { 0.957}          & { \textbf{0.957}} & { \textbf{0.462}} & \multicolumn{1}{c|}{{ \textbf{0.257}}} & { 0.917}          & { 0.915}          & { 0.763}          & \multicolumn{1}{c|}{{ \textbf{0.229}}} & { \textbf{0.832}} & { \textbf{0.831}} & { \textbf{0.734}} & { \textbf{0.266}} \\ \midrule
\multicolumn{1}{c|}{{ }}                               & \multicolumn{1}{c|}{{ STD}}             & { 0.950}          & { 0.540}          & { 0.948}          & \multicolumn{1}{c|}{{ 0.208}}          & { \textbf{0.930}} & { 0.844}          & { 0.928}          & \multicolumn{1}{c|}{{ 0.176}}          & { 0.829}          & { 0.818}          & { 0.828}          & { 0.210}          \\
\multicolumn{1}{c|}{\multirow{-2}{*}{{ MobileNetV2}}}    & \multicolumn{1}{c|}{{ PICNN}}           & { \textbf{0.956}} & { \textbf{0.952}} & { \textbf{0.403}} & \multicolumn{1}{c|}{{ \textbf{0.261}}} & { 0.928}          & { \textbf{0.925}} & { \textbf{0.844}} & \multicolumn{1}{c|}{{ \textbf{0.251}}} & { \textbf{0.839}} & { \textbf{0.834}} & { \textbf{0.746}} & { \textbf{0.258}} \\ \midrule
\multicolumn{1}{c|}{{ }}                               & \multicolumn{1}{c|}{{ STD}}             & { 0.955}          & { 0.351}          & { 0.951}          & \multicolumn{1}{c|}{{ 0.174}}          & { \textbf{0.942}} & { 0.889}          & { 0.940}          & \multicolumn{1}{c|}{{ 0.165}}          & { 0.836}          & { 0.820}          & { 0.828}          & { 0.229}          \\
\multicolumn{1}{c|}{\multirow{-2}{*}{{ EfficientNet-B0}}} & \multicolumn{1}{c|}{{ PICNN}}           & { \textbf{0.956}} & { \textbf{0.955}} & { \textbf{0.380}} & \multicolumn{1}{c|}{{ \textbf{0.253}}} & { 0.940}          & { \textbf{0.937}} & { \textbf{0.873}} & \multicolumn{1}{c|}{{ \textbf{0.253}}} & { \textbf{0.842}} & { \textbf{0.833}} & { \textbf{0.764}} & { \textbf{0.275}} \\ \bottomrule 
\end{tabular}
\caption{Experimental results on three benchmark datasets. We conduct comparison experiments between standard methods and our method on six CNN architectures. STD stands for standard CNNs, ICCNN stands for Compositional CNN, CSG stands for Class-Specific Gate CNN, and PICNN is our method. A higher value is better for ACC1, ACC2, and MIS, while a lower value is better for ACC3.}
\label{tab_method_comparison}                                                                                                                       
\end{table*}
\section{Experiments}
% \subsection{Evaluation on Real Benchmarks}
\subsection{Experimental Settings}
\subsubsection{Evaluation Metric}
We use classification accuracy and mutual information score (MIS)~\cite{liang2020training} as evaluation metrics to assess the discrimination power and interpretability. We report the following four metrics: \textbf{1)} classification accuracy of the discrimination pathway (ACC1$\uparrow$). 
ACC1 evaluates the discrimination power.
% which measures the discrimination power. 
\textbf{2)} classification accuracy of the interpretation pathway (ACC2$\uparrow$).
% ACC2 is the classification accuracy of the interpretation pathway. 
% The Interpretation Pathway can only receive effective training when the Discrimination Pathway makes the correct prediction and it just uses partial filters. 
Since the input to the interpretation pathway is the feature maps of a class-specific cluster of filters, ACC2 is consistently lower than ACC1. 
% The closer ACC2 is to ACC1, the more useful information is preserved in class-specific filters.
\textbf{3)} classification accuracy using all the filters except a class-specific cluster of filters (ACC3$\downarrow$).
% ACC3 is the classification accuracy using all filters except class-specific ones. 
% A logical \underline{N}OT operation is performed on the assignment vector $\mathbf{z}$ to get a new mask to select the non class-specific feature maps. 
Since the filters used in the computation of ACC3 and ACC2 complement each other, the lower ACC3 is, the better the model. 
 % A logical NOT operation is performed on the assignment vector $\mathbf{z}$ to get a new mask to select the non class-specific feature maps.
% After training, we first assign each filter to a  unique class based on the learned correlation matrix. Then for each input, we mask feature maps of the class-specific filters for the given input, while keeping the feature maps of other filters. 
% The resulting accuracy provides a measure of the class-specificity of filters, with lower accuracy indicating greater class-specificity.
% \textbf{4)} Mutual information score (MIS $\uparrow$). 
\textbf{4)} MIS (MIS $\uparrow$) measures the mutual information between filter activation and prediction on classes.
MIS is defined as $\text{MIS}=\mathbf{mean}_i(\mathbf{max}_y(m_{yi}))$, where ${m}_{yi}=\mathbf{MI}(\mathbf{H}_i \| \mathbf{y})$.
The metrics ACC2, ACC3, and MIS quantify the degree of filter-class entanglement from different aspects, thus they can evaluate the interpretability.
 % ACC2 and ACC3 provide indirect assessments of interpretability, as they reflect the decoupling ability utilized to disentangle the filters. MIS measures the interpretability directly.

\subsubsection{Datasets and Types of Backbone CNNs} We use three benchmark classification datasets in Table \ref{tab_method_comparison}, including CIFAR-10 \cite{krizhevsky2009learning}, STL-10 \cite{coates2011analysis}, and PASCAL VOC Part \cite{chen2014detect}. 
CIFAR-10 consists of 50,000 training images and 10,000 test images in 10 classes. STL-10 contains 5,000 training images and 8,000 test images in 10 classes. Following \cite{liang2020training}, we select six animal classes from PASCAL VOC Part with a 70\%/30\% training/test split. 
To further evaluate the efficacy and effectiveness of PICNN on large datasets with more classes, we use CIFAR-100 \cite{krizhevsky2009learning} and TinyImageNet \cite{deng2009imagenet}
% , and ImageNet \cite{deng2009imagenet} 
in Table \ref{tab_largedataset}. Like CIFAR-10, CIFAR-100 also consists of 50,000 training images and 10,000 test images evenly distributed into 100 classes. 
% ImageNet contains approximately 1.2 million training images and we use 50,000 validation images as test images for 1,000 classes. 
TinyImageNet is a scaled-down version of the original ImageNet involving 200 classes with a 10:1 ratio of training images to test images. We use the official training/test data split, except for PASCAL VOC Part.

We combine our interpretation pathway with six typical CNN architectures, three large CNN architectures, and a transformer architecture \cite{dosovitskiy2020image} (ViT-b-12) to make them interpretable. 
The six CNN architectures are VGG-11 \cite{simonyan2014very}, AlexNet \cite{krizhevsky2014one}, ResNet-18 \cite{he2016deep}, DenseNet-121 \cite{huang2017densely}, MobileNetV2 \cite{sandler2018mobilenetv2}, and EfficientNet-B0 \cite{tan2019efficientnet}. 
The three deeper and wider CNN architectures include ResNet-50 \cite{he2016deep}, ResNet-152 \cite{he2016deep}, and Wide-ResNet \cite{zerhouni2017wide}.

\subsubsection{Implementation Details}
Our code is based on the PyTorch \cite{paszke2019pytorch} toolbox and publicly available at Github\footnote{\url{https://github.com/spdj2271/PICNN}}. We make filters in the target conv-layer class-specific by optimizing our loss function. 
The regularization parameter $\lambda$ is set to 2 and the effect of the $\lambda$ values is discussed later. Other default settings include: a batch size of 128; the Adam optimizer with an initial learning rate of 0.001; pretrained weights from ImageNet \cite{deng2009imagenet}; and a total of 200 training epochs.
All metrics presented in this paper are computed on the test set. 
The experiments are carried out on a server with an Xeon(R) Platinum 8352V CPU and one Nvidia RTX 4090 GPU.

\subsection{Experimental Results}

\subsubsection{Comparison with Existing Methods}\label{sec:comp}
% We compare our approach with interpretable CNNs (ICNN) \cite{zhang2018interpretable}, interpretable compositional CNNs (ICCNN)  and Class-Specific Gate (CSG) \cite{liang2020training}.
We first compare our PICNN with a Standard CNN (STD) (ResNet-18, $\lambda=0$), Interpretable Compositional CNNs (ICCNN) \cite{shen2021interpretable} and Class-Specific Gate (CSG) \cite{liang2020training}. All the networks use the same backbone ResNet-18. For STD, we randomly initialize the correspondence matrix $\mathbf{P}$ and use our Bernoulli sampling to generate the filter-cluster assignment matrix. As shown in the upper part of Table \ref{tab_method_comparison}, PICNN significantly outperforms the comparison methods in terms of ACC2, ACC3, and MIS across all datasets. Since the filters learned by ICCNN are not class-specific, ACC2 and ACC3 are not suitable to evaluate its performance. The corresponding result is denoted as N/A.
Additionally, PICNN is better than STD on PASCAL VOC Part and STL-10 datasets in terms of ACC1 and achieves comparable results to STD on CIFAR-10. CSG shows a noticeable decrease in ACC1 on CIFAR-10, and ICCNN shows relatively poorer performance on PASCAL VOC Part and STL-10. The results on large datasets are shown in Table~\ref{tab_largedataset} and demonstrate once again that PICNN has a significant improvement in ACC2, ACC3, and MIS over STD. 
% PICNN underperforms in ACC1 on datasets CIFAR-100 and TinyImageNet, but it outperforms STD on the ImageNet dataset.

% When the backbones are the other five CNN architectures, we can still see that PICNN outperforms the backbones by a large margin in most cases. 
All these results indicate that PICNN achieves better interpretability and maintains comparable discrimination power simultaneously.
In contrast, the comparison methods are unsatisfactory in both the discrimination power and interpretability.
%\textbf{compare predefined filter-class correspondence}
\begin{table}[!htp]
\setlength\tabcolsep{1.5pt}
	\begin{tabular}{l|c|c|cccc}
		\toprule
		Dataset &  Class & Model & ACC1$\uparrow$ & ACC2$\uparrow$ & ACC3$\downarrow$ & MIS$\uparrow$ \\ \midrule
		\multirow{2}{*}{CIFAR-100} & \multirow{2}{*}{100} & STD & \textbf{0.755} & 0.039 & 0.753 & 0.020 \\
		&  & PICNN & 0.724 & \textbf{0.707} & \textbf{0.131} & \textbf{0.026} \\ 
  \midrule
  \multirow{2}{*}{TinyImageNet} & \multirow{2}{*}{200} & STD & \textbf{0.619} & 0.010 & 0.619 & 0.010 \\
		&  & PICNN & 0.555 & \textbf{0.514}& \textbf{0.082} & \textbf{0.017}  \\ \midrule
  % \multirow{2}{*}{ImageNet} & \multirow{2}{*}{1000} & STD & 0.611 & 0.001 & 0.611 & 0.002  \\
		% &  & PICNN & \textbf{0.616} & \textbf{0.594} & \textbf{0.277} & \textbf{0.005} \\ \bottomrule
	\end{tabular}
	\caption{Performance on large multi-class datasets with ResNet-18 as the backbone.}
	\label{tab_largedataset}
\end{table}

\subsubsection{Various Network Architectures}
One advantage of our method lies in its versatility to combine with various network architectures.
As shown in Table \ref{tab_method_comparison} and Table \ref{tab_large_model}, our novel pathway significantly boosts the interpretability of all the ten network architectures, which include six typical CNN architectures, three large CNN architectures, and a transformer architecture.
Except for using DenseNet-121 as the backbone on the PASCAL VOC Part dataset, PICNN has the best performance in terms of all three interpretability metrics ACC2, ACC3, and MIS. Compared with STD, the improvement is significant. PICNN does not work well when DenseNet-121 is the backbone. One reason might be: each layer in DenseNet-121 is connected to all other layers. 
As a result, the target conv-layer may include too many low-level concepts from early conv-layers, which leads to difficulties in grouping filters into class-specific clusters. 
We can also see that PICNN achieves higher ACC1 values in 14 out of 21 experiments. Besides, the performance of PICNN with ViT-b-12 as the backbone demonstrates that our approach is not only applicable to CNN architectures but also to transformer architectures.
\begin{table}[!htp]
\setlength\tabcolsep{1.5pt}
	\begin{tabular}{@{}l|c|cccc|cr@{}}
		\toprule
		Backbone & Metric & ACC1$\uparrow$ & ACC2$\uparrow$ & ACC3$\downarrow$ & MIS$\uparrow$ & Train~~ & ~~Infer \\ \midrule
		ResNet & STD & 0.951 & 0.457 & 0.948 & 0.115 & 18302 & 1564 \\
		-50 & PICNN & \textbf{0.955} & \textbf{0.950} & \textbf{0.318} & \textbf{0.231} & 18484 & 1592 \\ \midrule
		ResNet & STD & 0.957 & 0.481 & 0.954 & 0.150 & 27339 & 2403 \\
		-152 & PICNN & \textbf{0.958} & \textbf{0.954} & \textbf{0.309} & \textbf{0.230} & 27377 & 2514 \\ \midrule
  Wide- & STD & \textbf{0.940} & 0.295 &0.930 & 0.117 & 76572 & 4733 \\
	ResNet	& PICNN & \textbf{0.940} & \textbf{0.933} & \textbf{0.137} & \textbf{0.204} & 77212 & 4831 \\ \midrule
 \midrule
		ViT & STD & 0.799 & 0.242 & 0.794 & 0.066 & 37245 & 3234 \\
	-b-12	& PICNN & \textbf{0.802} & \textbf{0.746} & \textbf{0.039} & \textbf{0.096} & 37615 & 3257 \\ \bottomrule
	\end{tabular}
	\caption{Evaluation for large CNNs and Vision Transformer architecture on the CIFAR-10 dataset. 
    The final two columns display the training and inference time (in seconds).}
	\label{tab_large_model}
\end{table}

% which indicates PICNN has slightly stronger discriminative ability.
\subsubsection{Effectiveness of the Bernoulli Sampling}
Instead of using our Bernoulli sampling to generate the filter-cluster assignment matrix, we can apply Gumbel-Softmax to the correspondence matrix $\mathbf{P}$. 
% The purpose of obtaining assignment vector $\mathbf{z}$ from the correspondence matrix $\mathbf{P}$ is to determine which class each filter corresponds to. Without using our Sampling Strategy, one could directly set the maximum value of each column in the matrix $\mathbf{P}$ to 1 and the rest to 0. However, this approach is non-differentiable, making it impossible to learn the values within the matrix $\mathbf{P}$. Gumbel-Softmax adds gumbel noise to the matrix $\mathbf{P}$ and then applies softmax to get a differentiable soft one-hot approximation. It allows the update of the matrix $\mathbf{P}$. 
We conducted an ablation study using Gumbel-Softmax (temperature is set to 0.01) to replace the Bernoulli sampling in PICNN. It can be seen from Table \ref{Table_Gumbel} that the results of the Bernoulli sampling are better than those of Gumbel-Softmax, especially using the metric ACC3. One possible explanation is that Gumbel-Softmax introduces random Gumbel noise to approximate the argmax operation for enabling gradient backpropagation. The approximation affects the results.

\begin{table}[!tbhp]
\setlength\tabcolsep{2.5pt}
\centering
    \begin{tabular}{c|cccc}
    \toprule
    { Model} & { ACC1$\uparrow$} & { ACC2$\uparrow$} & { ACC3$\downarrow$}   & { MIS$\uparrow$}    \\
    \midrule
    {Gumbel-Softmax} & { 0.950} & { 0.946} & { 0.910} & { 0.240}\\
    {Bernoulli-Samping}  & { \textbf{0.951}} & { \textbf{0.948}} & { \textbf{0.333}}  & { \textbf{0.243}}\\
    \bottomrule
    \end{tabular}
    \caption{Performance comparison of PICNN using different strategies for generating filter-cluster assignment matrix on the CIFAR-10 dataset. ResNet-18 is used as the backbone of PICNN.}
\label{Table_Gumbel}
\end{table}

\subsubsection{Effect of the filter-to-class ratio}

For a target conv-layer in a given CNN, we set the filter-to-class ratio to 12.8. This means that for CIFAR-10, we choose the number of filters to be 128. 
To exploit the effect of the filter-to-class ratio $r$, we vary $r=N/K$ on the CIFAR-10 dataset, where $N$ is the number of filters in the target conv-layer and $K$ is the number of image classes. As shown in Figure~\ref{fig_r}, PICNN shows consistent advantages over STD (ResNet-18) across different $r$ values. This indicates that PICNN is robust to the filter-to-class ratio.

\begin{figure}[!htbp]
	\begin{center}
		\includegraphics[width=0.49\columnwidth]{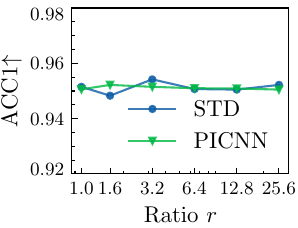}
  % \hspace{5pt}
		\includegraphics[width=0.49\columnwidth]{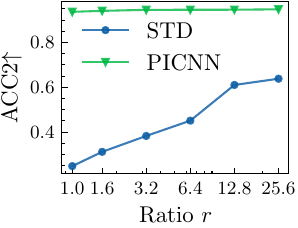}\\
		\includegraphics[width=0.49\columnwidth]{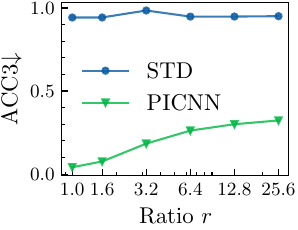}
  % \hspace{5pt}
		\includegraphics[width=0.49\columnwidth]{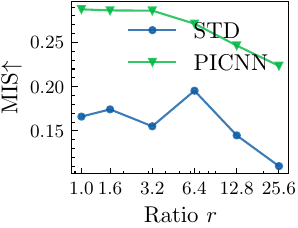}
	\end{center}
	\caption{Effect of the filter-to-class ratio $r={N}/{K}$ on the performance. The backbone (STD) of PICNN is ResNet-18.}
	\label{fig_r}
\end{figure} 

% \begin{figure}[!h]
% 	\setlength\tabcolsep{1.5pt}
% 	\setlength{\abovecaptionskip}{0cm}  %段前
% 	\setlength{\belowcaptionskip}{-0.3cm} %段后
% 	\begin{center}
% 		\includegraphics[width=0.2425\columnwidth]{}
% 		\includegraphics[width=0.2425\columnwidth]{}
% 		\includegraphics[width=0.2425\columnwidth]{}
% 		\includegraphics[width=0.2425\columnwidth]{}
% 	\end{center}
% 	\caption{Effect of the filter-to-class ratio $r={N}/{K}$.}
% 	\label{fig_r}
% \end{figure} 

\subsubsection{Effect of Regularization Parameter $\lambda$}
Figure \ref{effect_of_lambda} plots the curves of ACC1, ACC2, ACC3, and MIS of PICNN using ResNet-18 as the backbone on the CIFAR-10 dataset when varying the values of $\lambda$ from 0 to 10. ACC3 decreases steadily, MIS increases fluctuately, while ACC1 and ACC2 remain stable as $\lambda$ increases.
This implies that with the increase of interpretation pathway weights, the filters at the target conv-layer become more class-specific and the discrimination power of the discrimination pathway is preserved.
\begin{figure}[!htbp]
	\begin{center}
		\includegraphics[width=0.45\textwidth]{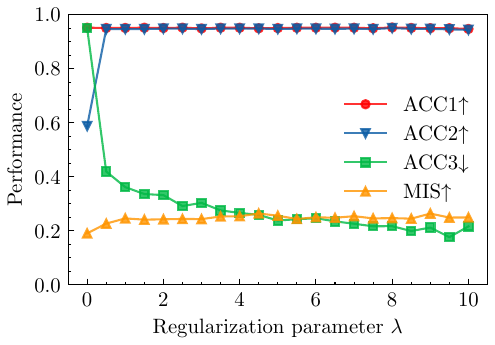}
	\end{center}
	\caption{
		Effect of regularization parameter $\lambda$ of PICNN using ResNet-18 as the backbone on the CIFAR-10 dataset. PICNN works well across a wide range of $\lambda$.
	}
	\label{effect_of_lambda}
\end{figure}

\begin{figure*}[t]
	\begin{center}
	\includegraphics[width=\textwidth]{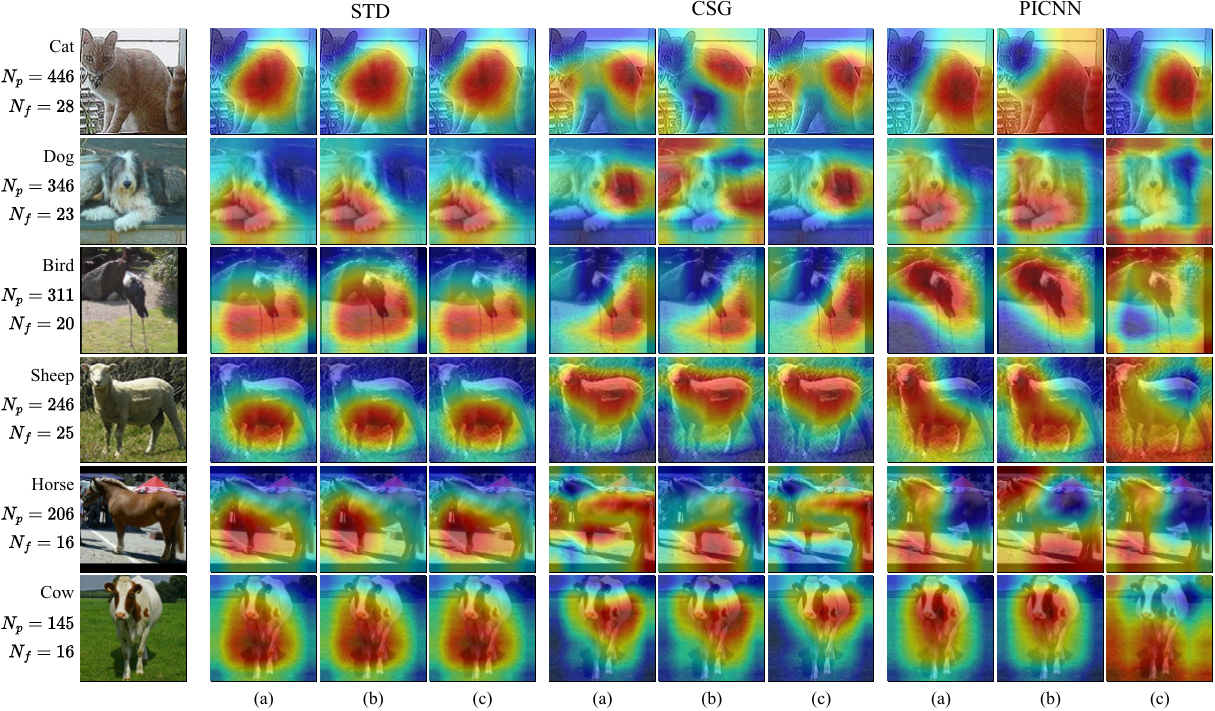}
	\end{center}
	\caption{
 The Grad-CAM visualizations of filters on the PASCAL VOC Part dataset learned by STD (ResNet-18), CSG, and PICNN. CSG and PICNN use the same STD (ResNet-18) as the backbone. In each method, the (a) column represents CAMs using all filters, (b) column represents CAMs using a cluster of class-specific filters, and (c) column represents CAMs using the complementary clusters of class-specific filters. $N_p$ is the number of pictures in a class and $N_f$ is the number of class-specific filters assigned to a class.
	}
	\label{fig_cam}
\end{figure*} 

\subsubsection{Visualization}
Figure \ref{fig_cam} displays the Grad-CAM visualizations on the PASCAL VOC Part dataset using three different sets of filters: (a) all filters, (b) a cluster of class-specific filters, and (c) the complementary clusters of class-specific filters.
% We visualize the distributions of three different sets of filters using Grad-CAM implemented on PASCAL VOC Part dataset and ResNet model: (a) all filters, (b) only class-specific filters, and (c) all filters excluding class-specific ones in Figure \ref{fig_cam}. 
As above, we randomly initialize the correspondence matrix $\mathbf{P}$ and use our Bernoulli sampling to generate the filter-cluster assignment matrix for STD (ResNet-18). We observe few differences among these three Grad-CAM visualizations in STD. This phenomenon proves that the filters and classes are entangled in STD. 
% This is attributed to the untrained correlation matrix, which leads to the random selection of class-specific filters from the entire set of filters. The consistency of the images reflects the robustness of the network.
The Grad-CAM visualizations using the second set of filters (column (b)) of PICNN sometimes capture more class-specific information than using the first set of filters (column (a)), such as highlighting more parts of the cat's body and the horse's head. The Grad-CAM visualizations using the third set of filters (column (c)) of PICNN mostly highlight unimportant image regions, such as the meadow and road. We can see that the Grad-CAM visualizations using the second and third sets of filters of PICNN differ a lot. But we do not see a significant difference in STD and CSG.

\section{Conclusion}
In this paper, we have proposed a novel pathway to transform a standard CNN into an interpretable CNN without compromising its high discrimination power.
The proposed pathway uses the Bernoulli sampling to generate the filter-cluster assignment matrix from a learnable filter-class correspondence matrix. The filter-cluster assignment matrix groups the filters in the target late conv-layer in CNN into class-specific clusters and thus mitigates the filter-class entanglement problem. Because the Bernoulli sampling is non-differentiable, we propose a reparameterization trick for end-to-end learning. Experiments have shown that our method PICNN is superior to standard CNNs in terms of both interpretability and discrimination power. Moreover, our pathway has good versatility and can be combined with various network architectures.

\section{Acknowledgments}
We thank the anonymous reviewers for their valuable and constructive comments. This work was supported in part by the National Key Research and Development Program of China under Grant 2020AAA0108100.
% The support provided by China Scholarship Council (CSC) during a visit of Wengang Guo to Vienna is acknowledged.

\bibliography{reference}

\begin{thebibliography}{48}
\providecommand{\natexlab}[1]{#1}

\bibitem[{Bau et~al.(2017)Bau, Zhou, Khosla, Oliva, and
  Torralba}]{bau2017network}
Bau, D.; Zhou, B.; Khosla, A.; Oliva, A.; and Torralba, A. 2017.
\newblock Network dissection: Quantifying interpretability of deep visual
  representations.
\newblock In \emph{CVPR}.

\bibitem[{Chen et~al.(2014)Chen, Mottaghi, Liu, Fidler, Urtasun, and
  Yuille}]{chen2014detect}
Chen, X.; Mottaghi, R.; Liu, X.; Fidler, S.; Urtasun, R.; and Yuille, A. 2014.
\newblock Detect what you can: Detecting and representing objects using
  holistic models and body parts.
\newblock In \emph{CVPR}.

\bibitem[{Coates, Ng, and Lee(2011)}]{coates2011analysis}
Coates, A.; Ng, A.; and Lee, H. 2011.
\newblock An analysis of single-layer networks in unsupervised feature
  learning.
\newblock In \emph{AISTATS}.

\bibitem[{D'Amour et~al.(2022)D'Amour, Heller, Moldovan, Adlam, Alipanahi,
  Beutel, Chen, Deaton, Eisenstein, Hoffman et~al.}]{d2022underspecification}
D'Amour, A.; Heller, K.; Moldovan, D.; Adlam, B.; Alipanahi, B.; Beutel, A.;
  Chen, C.; Deaton, J.; Eisenstein, J.; Hoffman, M.~D.; et~al. 2022.
\newblock Underspecification presents challenges for credibility in modern
  machine learning.
\newblock \emph{JMLR}.

\bibitem[{Deng et~al.(2009)Deng, Dong, Socher, Li, Li, and
  Fei-Fei}]{deng2009imagenet}
Deng, J.; Dong, W.; Socher, R.; Li, L.-J.; Li, K.; and Fei-Fei, L. 2009.
\newblock Imagenet: A large-scale hierarchical image database.
\newblock In \emph{CVPR}.

\bibitem[{Dosovitskiy et~al.(2020)Dosovitskiy, Beyer, Kolesnikov, Weissenborn,
  Zhai, Unterthiner, Dehghani, Minderer, Heigold, Gelly
  et~al.}]{dosovitskiy2020image}
Dosovitskiy, A.; Beyer, L.; Kolesnikov, A.; Weissenborn, D.; Zhai, X.;
  Unterthiner, T.; Dehghani, M.; Minderer, M.; Heigold, G.; Gelly, S.; et~al.
  2020.
\newblock An image is worth 16x16 words: Transformers for image recognition at
  scale.

\bibitem[{Dosovitskiy and Brox(2016)}]{dosovitskiy2016inverting}
Dosovitskiy, A.; and Brox, T. 2016.
\newblock Inverting visual representations with convolutional networks.
\newblock In \emph{CVPR}.

\bibitem[{Fong and Vedaldi(2018)}]{fong2018net2vec}
Fong, R.; and Vedaldi, A. 2018.
\newblock Net2vec: Quantifying and explaining how concepts are encoded by
  filters in deep neural networks.
\newblock In \emph{CVPR}.

\bibitem[{Goodfellow et~al.(2014)Goodfellow, Pouget-Abadie, Mirza, Xu,
  Warde-Farley, Ozair, Courville, and Bengio}]{goodfellow2014generative}
Goodfellow, I.; Pouget-Abadie, J.; Mirza, M.; Xu, B.; Warde-Farley, D.; Ozair,
  S.; Courville, A.; and Bengio, Y. 2014.
\newblock Generative adversarial nets.
\newblock In \emph{NeurIPS}.

\bibitem[{Hasany, Petitjean, and M{\'e}riaudeau(2023)}]{hasany2023seg}
Hasany, S.~N.; Petitjean, C.; and M{\'e}riaudeau, F. 2023.
\newblock Seg-XRes-CAM: Explaining Spatially Local Regions in Image
  Segmentation.
\newblock In \emph{CVPR}.

\bibitem[{He et~al.(2016)He, Zhang, Ren, and Sun}]{he2016deep}
He, K.; Zhang, X.; Ren, S.; and Sun, J. 2016.
\newblock Deep residual learning for image recognition.
\newblock In \emph{CVPR}.

\bibitem[{Hendricks et~al.(2016)Hendricks, Akata, Rohrbach, Donahue, Schiele,
  and Darrell}]{hendricks2016generating}
Hendricks, L.~A.; Akata, Z.; Rohrbach, M.; Donahue, J.; Schiele, B.; and
  Darrell, T. 2016.
\newblock Generating visual explanations.
\newblock In \emph{ECCV}.

\bibitem[{Huang et~al.(2017)Huang, Liu, Van Der~Maaten, and
  Weinberger}]{huang2017densely}
Huang, G.; Liu, Z.; Van Der~Maaten, L.; and Weinberger, K.~Q. 2017.
\newblock Densely connected convolutional networks.
\newblock In \emph{CVPR}.

\bibitem[{Jang, Gu, and Poole(2016)}]{jang2016categorical}
Jang, E.; Gu, S.; and Poole, B. 2016.
\newblock Categorical Reparameterization with Gumbel-Softmax.
\newblock In \emph{ICLR}.

\bibitem[{Jiang et~al.(2017)Jiang, Wang, Davis, Andrews, and
  Rozgic}]{jiang2017learning}
Jiang, Z.; Wang, Y.; Davis, L.; Andrews, W.; and Rozgic, V. 2017.
\newblock Learning discriminative features via label consistent neural network.
\newblock In \emph{WACV}.

\bibitem[{Kingma and Welling(2014)}]{kingmaauto}
Kingma, D.~P.; and Welling, M. 2014.
\newblock Auto-Encoding Variational Bayes.
\newblock In \emph{ICLR}.

\bibitem[{Krizhevsky(2014)}]{krizhevsky2014one}
Krizhevsky, A. 2014.
\newblock One weird trick for parallelizing convolutional neural networks.
\newblock \emph{CoRR}.

\bibitem[{Krizhevsky, Hinton et~al.(2009)}]{krizhevsky2009learning}
Krizhevsky, A.; Hinton, G.; et~al. 2009.
\newblock Learning multiple layers of features from tiny images.

\bibitem[{Kweon et~al.(2021)Kweon, Yoon, Kim, Park, and
  Yoon}]{kweon2021unlocking}
Kweon, H.; Yoon, S.-H.; Kim, H.; Park, D.; and Yoon, K.-J. 2021.
\newblock Unlocking the potential of ordinary classifier: Class-specific
  adversarial erasing framework for weakly supervised semantic segmentation.
\newblock In \emph{ICCV}.

\bibitem[{Lee et~al.(2021)Lee, Kim, Park, Eo, and Hwang}]{lee2021relevance}
Lee, J.~R.; Kim, S.; Park, I.; Eo, T.; and Hwang, D. 2021.
\newblock Relevance-cam: Your model already knows where to look.
\newblock In \emph{CVPR}.

\bibitem[{Li et~al.(2021)Li, Li, Lu, Shechtman, Lee, and
  Singh}]{li2021collaging}
Li, Y.; Li, Y.; Lu, J.; Shechtman, E.; Lee, Y.~J.; and Singh, K.~K. 2021.
\newblock Collaging class-specific gans for semantic image synthesis.
\newblock In \emph{ICCV}.

\bibitem[{Liang et~al.(2020)Liang, Ouyang, Zeng, Su, He, Xia, Zhu, and
  Zhang}]{liang2020training}
Liang, H.; Ouyang, Z.; Zeng, Y.; Su, H.; He, Z.; Xia, S.-T.; Zhu, J.; and
  Zhang, B. 2020.
\newblock Training interpretable convolutional neural networks by
  differentiating class-specific filters.
\newblock In \emph{ECCV}.

\bibitem[{Mahendran and Vedaldi(2015)}]{mahendran2015understanding}
Mahendran, A.; and Vedaldi, A. 2015.
\newblock Understanding deep image representations by inverting them.
\newblock In \emph{CVPR}.

\bibitem[{Martinez et~al.(2019)Martinez, Modolo, Xiong, and
  Tighe}]{martinez2019action}
Martinez, B.; Modolo, D.; Xiong, Y.; and Tighe, J. 2019.
\newblock Action recognition with spatial-temporal discriminative filter banks.
\newblock In \emph{ICCV}.

\bibitem[{Paszke et~al.(2019)Paszke, Gross, Massa, Lerer, Bradbury, Chanan,
  Killeen, Lin, Gimelshein, Antiga et~al.}]{paszke2019pytorch}
Paszke, A.; Gross, S.; Massa, F.; Lerer, A.; Bradbury, J.; Chanan, G.; Killeen,
  T.; Lin, Z.; Gimelshein, N.; Antiga, L.; et~al. 2019.
\newblock Pytorch: An imperative style, high-performance deep learning library.
\newblock In \emph{NeurIPS}.

\bibitem[{Rudin(2019)}]{rudin2019stop}
Rudin, C. 2019.
\newblock Stop explaining black box machine learning models for high stakes
  decisions and use interpretable models instead.
\newblock \emph{Nat. Mac. Intell.}

\bibitem[{Sandler et~al.(2018)Sandler, Howard, Zhu, Zhmoginov, and
  Chen}]{sandler2018mobilenetv2}
Sandler, M.; Howard, A.; Zhu, M.; Zhmoginov, A.; and Chen, L.-C. 2018.
\newblock Mobilenetv2: Inverted residuals and linear bottlenecks.
\newblock In \emph{CVPR}.

\bibitem[{Sarkar et~al.(2023)Sarkar, Babu, Gundecha, Guillen, Mousavi, Luna,
  Ghorbanpour, and Naug}]{sarkar2023rl}
Sarkar, S.; Babu, A.~R.; Gundecha, V.; Guillen, A.; Mousavi, S.; Luna, R.;
  Ghorbanpour, S.; and Naug, A. 2023.
\newblock RL-CAM: Visual Explanations for Convolutional Networks Using
  Reinforcement Learning.
\newblock In \emph{CVPR}.

\bibitem[{Selvaraju et~al.(2017)Selvaraju, Cogswell, Das, Vedantam, Parikh, and
  Batra}]{selvaraju2017grad}
Selvaraju, R.~R.; Cogswell, M.; Das, A.; Vedantam, R.; Parikh, D.; and Batra,
  D. 2017.
\newblock Grad-cam: Visual explanations from deep networks via gradient-based
  localization.
\newblock In \emph{ICCV}.

\bibitem[{Shen et~al.(2021)Shen, Wei, Huang, Zhang, Fan, Zhao, and
  Zhang}]{shen2021interpretable}
Shen, W.; Wei, Z.; Huang, S.; Zhang, B.; Fan, J.; Zhao, P.; and Zhang, Q. 2021.
\newblock Interpretable compositional convolutional neural networks.
\newblock In \emph{IJCAI}.

\bibitem[{Shi and Malik(2000)}]{shi2000normalized}
Shi, J.; and Malik, J. 2000.
\newblock Normalized cuts and image segmentation.
\newblock \emph{TPAMI}.

\bibitem[{Simonyan, Vedaldi, and Zisserman(2013)}]{simonyan2013deep}
Simonyan, K.; Vedaldi, A.; and Zisserman, A. 2013.
\newblock Deep inside convolutional networks: Visualising image classification
  models and saliency maps.
\newblock In \emph{ICLR}.

\bibitem[{Simonyan and Zisserman(2014)}]{simonyan2014very}
Simonyan, K.; and Zisserman, A. 2014.
\newblock Very deep convolutional networks for large-scale image recognition.
\newblock In \emph{ICLR}.

\bibitem[{Springenberg et~al.(2014)Springenberg, Dosovitskiy, Brox, and
  Riedmiller}]{springenberg2014striving}
Springenberg, J.~T.; Dosovitskiy, A.; Brox, T.; and Riedmiller, M. 2014.
\newblock Striving for simplicity: The all convolutional net.
\newblock In \emph{ICLR}.

\bibitem[{Tan and Le(2019)}]{tan2019efficientnet}
Tan, M.; and Le, Q. 2019.
\newblock Efficientnet: Rethinking model scaling for convolutional neural
  networks.
\newblock In \emph{ICML}.

\bibitem[{Tang et~al.(2020)Tang, Xu, Yan, Torr, and Sebe}]{tang2020local}
Tang, H.; Xu, D.; Yan, Y.; Torr, P.~H.; and Sebe, N. 2020.
\newblock Local class-specific and global image-level generative adversarial
  networks for semantic-guided scene generation.
\newblock In \emph{CVPR}.

\bibitem[{Wang et~al.(2021)Wang, Liu, Wang, and Jing}]{wang2021interpretable}
Wang, J.; Liu, H.; Wang, X.; and Jing, L. 2021.
\newblock Interpretable image recognition by constructing transparent embedding
  space.
\newblock In \emph{ICCV}.

\bibitem[{Wang, Morariu, and Davis(2018)}]{wang2018learning}
Wang, Y.; Morariu, V.~I.; and Davis, L.~S. 2018.
\newblock Learning a discriminative filter bank within a cnn for fine-grained
  recognition.
\newblock In \emph{CVPR}.

\bibitem[{Yang, Kim, and Joo(2022)}]{yang2022explaining}
Yang, Y.; Kim, S.; and Joo, J. 2022.
\newblock Explaining deep convolutional neural networks via latent
  visual-semantic filter attention.
\newblock In \emph{CVPR}.

\bibitem[{Yosinski et~al.(2015)Yosinski, Clune, Nguyen, Fuchs, and
  Lipson}]{yosinski2015understanding}
Yosinski, J.; Clune, J.; Nguyen, A.; Fuchs, T.; and Lipson, H. 2015.
\newblock Understanding neural networks through deep visualization.

\bibitem[{Zablocki et~al.(2022)Zablocki, Ben-Younes, P{\'e}rez, and
  Cord}]{zablocki2022explainability}
Zablocki, {\'E}.; Ben-Younes, H.; P{\'e}rez, P.; and Cord, M. 2022.
\newblock Explainability of deep vision-based autonomous driving systems:
  Review and challenges.
\newblock \emph{IJCV}.

\bibitem[{Zeiler and Fergus(2014)}]{zeiler2014visualizing}
Zeiler, M.~D.; and Fergus, R. 2014.
\newblock Visualizing and understanding convolutional networks.
\newblock In \emph{ECCV}.

\bibitem[{Zerhouni et~al.(2017)Zerhouni, L{\'a}nyi, Viana, and
  Gabrani}]{zerhouni2017wide}
Zerhouni, E.; L{\'a}nyi, D.; Viana, M.; and Gabrani, M. 2017.
\newblock Wide residual networks for mitosis detection.
\newblock In \emph{ISBI}. IEEE.

\bibitem[{Zhang et~al.(2018)Zhang, Cao, Shi, Wu, and
  Zhu}]{zhang2018interpreting}
Zhang, Q.; Cao, R.; Shi, F.; Wu, Y.~N.; and Zhu, S.-C. 2018.
\newblock Interpreting CNN knowledge via an explanatory graph.
\newblock In \emph{AAAI}.

\bibitem[{Zhang, Wu, and Zhu(2018)}]{zhang2018interpretable}
Zhang, Q.; Wu, Y.~N.; and Zhu, S.-C. 2018.
\newblock Interpretable convolutional neural networks.
\newblock In \emph{CVPR}.

\bibitem[{Zhang et~al.(2019)Zhang, Yang, Ma, and Wu}]{zhang2019interpreting}
Zhang, Q.; Yang, Y.; Ma, H.; and Wu, Y.~N. 2019.
\newblock Interpreting cnns via decision trees.
\newblock In \emph{CVPR}.

\bibitem[{Zhou et~al.(2015)Zhou, Khosla, Lapedriza, Oliva, and
  Torralba}]{zhou2015object}
Zhou, B.; Khosla, A.; Lapedriza, {\`A}.; Oliva, A.; and Torralba, A. 2015.
\newblock Object Detectors Emerge in Deep Scene CNNs.
\newblock In \emph{ICLR}.

\bibitem[{Zhou et~al.(2016)Zhou, Khosla, Lapedriza, Oliva, and
  Torralba}]{zhou2016learning}
Zhou, B.; Khosla, A.; Lapedriza, A.; Oliva, A.; and Torralba, A. 2016.
\newblock Learning deep features for discriminative localization.
\newblock In \emph{CVPR}.

\end{thebibliography}

\end{document}